%% file: main.tex
\documentclass{article}

\usepackage{subcaption}

\usepackage{hyperref}

\usepackage{microtype}
\usepackage{graphicx}
\usepackage{booktabs}
\usepackage{xcolor}

\usepackage{amsthm}
\usepackage{bm}
\usepackage{enumitem}

\usepackage{tabularray}
\UseTblrLibrary{booktabs}
\UseTblrLibrary{siunitx}
\usepackage{caption}

\usepackage{multirow}

\usepackage{adjustbox}
\usepackage{CJKutf8}
\usepackage[T1]{fontenc}
\usepackage[french,english]{babel}
\usepackage{bm}
\usepackage{pifont}

\newif\ifshowcomments
\showcommentsfalse 
\definecolor{mypink}{HTML}{FB2E99}

\ifshowcomments
  \newcommand{\ql}[1]{\textcolor{teal}{[Lou: #1]}}
  \newcommand{\jx}[1]{\textcolor{red}{[Xing: #1]}}
  \newcommand{\jq}[1]{\textcolor{blue}{[Xue: #1]}}
\else
  \newcommand{\ql}[1]{}
  \newcommand{\jx}[1]{}
  \newcommand{\jq}[1]{}
\fi

\usepackage{amsfonts}

\definecolor{Orange}{RGB}{245, 129, 55}
\definecolor{Green}{RGB}{57, 137, 95}
\definecolor{Blue}{RGB}{27, 128, 242}
\definecolor{Purple}{RGB}{82, 76, 179}
\definecolor{Red}{RGB}{208, 118, 118}

\newcommand{\cred}[1]{#1}

\usepackage[accepted]{icml2026}

\usepackage{amsmath}
\usepackage{amssymb}
\usepackage{mathtools}
\usepackage{amsthm}

\usepackage[dvipsnames]{xcolor}
\usepackage{fontawesome5}  

\usepackage[capitalize,noabbrev]{cleveref}

\theoremstyle{plain}
\newtheorem{theorem}{Theorem}[section]

\theoremstyle{definition}
\newtheorem{definition}[theorem]{Definition}

\theoremstyle{remark}

\usepackage[disable,textsize=tiny]{todonotes}

\begin{document}
\twocolumn[
  \icmltitle{\textsc{R2-Router}: A New Paradigm for LLM Routing with Reasoning}




  \begin{icmlauthorlist}
    \icmlauthor{Jiaqi Xue}{UCf}
    \icmlauthor{Qian Lou}{UCf}
    \icmlauthor{Jiarong Xing}{RICE}
    \icmlauthor{Heng Huang}{UMD}
  \end{icmlauthorlist}

  \icmlaffiliation{UCf}{University of Central Florida}
  \icmlaffiliation{UMD}{University of Maryland College Park}
  \icmlaffiliation{RICE}{Rice University}
  \icmlcorrespondingauthor{Qian Lou}{qian.lou@ucf.edu}


  \icmlkeywords{Machine Learning, ICML}

  
  \vskip 0.3in

]




\printAffiliationsAndNotice{}  

\begin{abstract}
As LLMs proliferate with diverse capabilities and costs, LLM routing has emerged by learning to predict each LLM's quality and cost for a given query, then selecting the one with high quality and low cost. However, existing routers implicitly assume a single fixed quality and cost per LLM for each query, ignoring that the same LLM's quality varies with its output length. This causes routers to exclude powerful LLMs when their estimated cost exceeds the budget, missing the opportunity that these LLMs could still deliver high quality at reduced cost with shorter outputs.
To address this, we introduce \textsc{R2-Router}, which treats output length budget as a controllable variable and jointly selects the best LLM and length budget, enforcing the budget via length-constrained instructions. This enables \textsc{R2-Router} to discover that a powerful LLM with constrained output can outperform a weaker LLM at comparable cost—efficient configurations invisible to prior methods. 
Together with the router framework, we construct \textsc{R2-Bench}, the first routing dataset capturing LLM behavior across diverse output length budgets. Experiments show that \textsc{R2-Router} achieves state-of-the-art performance at $4-5\times$ lower cost compared with existing routers. This work opens a new direction: \textit{routing as reasoning}, where routers evolve from reactive selectors to deliberate reasoners that explore which LLM to use and at what cost budget. \cred{The code is publicly available at \url{https://github.com/UCF-ML-Research/R2-Router}.}
\end{abstract}

\section{Introduction}
Large language models (LLMs) have shown impressive effectiveness across diverse domains, such as research, industry, and daily applications. This success has catalyzed the emergence of numerous LLMs with diverse architectures, scales, and training objectives, leading to substantial variation in their areas of specialization, response quality, and inference cost. Generally, larger models tend to provide higher quality but come with higher costs, while smaller ones are more affordable but less capable. This diversity creates a new systems-level challenge: deciding, for each query, which model to use to achieve the best balance between quality and cost. 

\begin{figure}[t!]
\centering
\includegraphics[width=0.9\columnwidth]{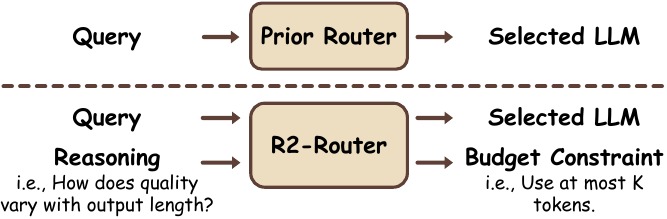}
\caption{Prior routers take a query and select an LLM based on estimated quality and cost. \textsc{R2-Router} additionally reasons about how quality varies with output length, selecting both the best LLM and an appropriate budget constraint.}
\label{f:tiny_overview}
\vspace{-0.2in}
\end{figure}

\begin{figure*}[t!]
  \centering
  \includegraphics[width=\linewidth]{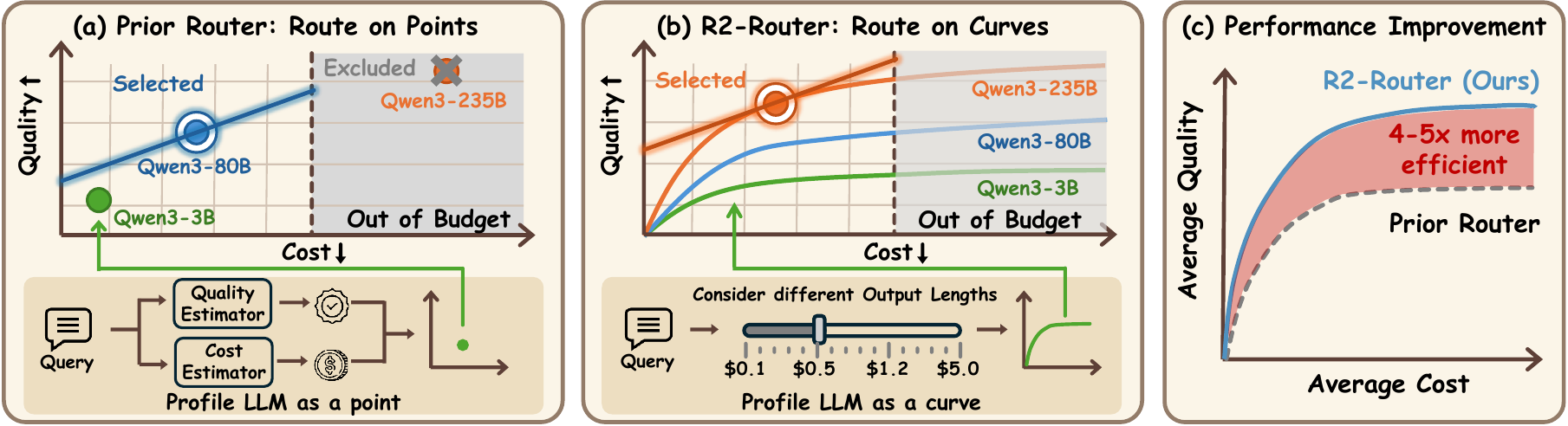}
  \caption{(a) Prior Router: Route on Points. Existing routers profile each LLM as a static point, excluding powerful LLMs like Qwen3-235B when their estimated cost exceeds the budget. (b) \textsc{R2-Router}: Route on Curves. By reasoning about cost-dependent quality, \textsc{R2-Router} transforms each LLM from a point into a curve, discovering that powerful LLMs can deliver superior quality at reduced cost. (c) \textsc{R2-Router} achieves comparable quality at $4-5\times$ lower cost than prior routers.
  } 
  \label{f:all_overview}
\end{figure*}

LLM routing has emerged as a promising solution to this challenge, dynamically assigning each query to the most suitable LLM. However, existing routers estimate quality and cost separately, implicitly assuming that each LLM has a single fixed quality--cost profile for each input query. As illustrated in Figure~\ref{f:all_overview}~(a), this can lead to suboptimal decisions. For example, powerful LLMs (e.g., Qwen3-235B) may be excluded simply because their estimated cost exceeds the budget. This overlooks intra-model variability: \textbf{the same LLM's quality varies with its output length.} By constraining the output length, a powerful LLM can still deliver satisfactory quality at reduced cost. Existing routers are blind to such opportunities because they never reason about: \textit{``How does quality vary with output length?''}

To address this limitation, we propose \textsc{R2-Router}, which introduces a \textbf{reasoning}\footnote{\label{ft:r}We use ``reasoning'' by analogy to LLMs like Gemini that dynamically decide thinking depth: our router similarly decides the output length for each LLM to achieve the best balance.} step into routing. As illustrated in Figure~\ref{f:tiny_overview}, \textsc{R2-Router} reasons about the quality each LLM can achieve under different output lengths, then selects the best LLM along with a budget constraint enforced via length-constrained instructions~\citep{lee2025well} (e.g., ``use at most K tokens''). This reasoning process transforms each LLM from a single quality-cost point into a quality-cost curve, modeling how quality changes as output length varies, as illustrated in Figure~\ref{f:all_overview}~(b). Instead of choosing only among LLMs, \textsc{R2-Router} searches over all combinations of LLMs and output lengths. We refer to this as evolving from \textbf{routing on points} to \textbf{routing on curves}. Through this expanded search space, \textsc{R2-Router} discovers that Qwen3-235B, previously excluded for high cost, can deliver superior quality at reduced cost.

We also introduce \textsc{R2-Bench}, the first LLM routing dataset that captures LLM behavior across diverse output lengths for the same queries. Unlike existing routing benchmarks that record only a single response per LLM per query, \textsc{R2-Bench} systematically varies the output token budget, capturing how each LLM's quality evolves with cost. This enables the training and evaluation of routers that can reason about how quality varies with output length.

Extensive experiments demonstrate the effectiveness of our proposed method and dataset from multiple perspectives. First, by capturing quality-cost curves rather than single points, \textsc{R2-Bench} raises the oracle upper bound by $15\%$ in AUDC, revealing the full potential of routing that prior datasets cannot capture. Second, \textsc{R2-Router} achieves state-of-the-art performance, reaching comparable quality at $4-5\times$ lower cost compared with existing methods. Notably, \textsc{R2-Router} is lightweight and data-efficient: by interpolating between only $6$ anchor budgets, it approximates continuous quality-cost curves and achieves near-optimal performance with minimal overhead (30 minutes on a single GPU). Finally, \textsc{R2-Router}'s reasoning capability is general and can be integrated with various existing routers as a plug-in module. As a case study, we integrate \textsc{R2-Router} with UniRouter~\cite{jitkrittum2025universal}, a router designed for dynamic LLM pools. When five new models are added to the pool, the integrated router maintains strong adaptability while outperforming the original UniRouter: AUDC improves by $5\%$ and cost drops by $80\%$ to reach comparable quality. \cred{Beyond our own benchmarks, \textsc{R2-Router} ranks first on the public RouterArena leaderboard~\cite{lu2025routerarena} at the time of acceptance, confirming its effectiveness against a broad range of competing routers.}

\cred{\noindent\textbf{Conflict of Interest Disclosure.} The authors declare no conflicts of interest.}

\section{Related Work}

\subsection{LLM Efficient Reasoning}
Recent work has explored how scaling inference-time compute affects LLM reasoning and cost. While longer reasoning generally improves quality, it also increases inference cost. Empirical studies reveal that response quality grows with output length but saturates beyond a threshold~\cite{lee2025well}. This observation suggests that by moderately shortening responses through instructional prompts such as ``be concise''~\cite{nayab2024concise, jin2024impact} or ``use at most 100 words''~\cite{han2024token}, LLMs can often maintain comparable quality while substantially reducing inference cost.

\subsection{LLM Routing}

Recently, LLM routing has become a fundamental component of modern AI infrastructure. In the open-source ecosystem, platforms such as HuggingFace and vLLM provide routers~\cite{semanticrouter2025, huggingface_llm_router} that automatically select the best model for specific input. In commercial deployments, OpenAI's GPT-5~\cite{openai2025gpt5} integrates an internal router to balance capability and pricing tiers, and Azure AI Foundry~\cite{azure-model-router-2025} offers a model router as a paid service. Meanwhile, numerous commercial routing platforms such as~\citet{openrouter_auto_router}, \citet{notdiamond}, and \citet{requestyai} continue to emerge, further demonstrating the growing industrial potential of routing in AI infrastructure.

Earlier approaches like FrugalGPT~\cite{chen2024frugalgpt} and AutoMix~\cite{aggarwal2023automix} cascade queries through models ordered by cost, obtaining responses until one is deemed sufficient. However, this sequential design incurs multiple model calls and introduces latency. In contrast, predictive routing employs data-driven models, including neural networks~\cite{chen2024routerdc, vsakota2024fly, zhang2025router, zhang2026securerouter, wu2026privacy}, $k$-nearest neighbors~\cite{somerstep2025carrot,hu2024routerbench, stripelis2024tensoropera, wu2025efficient}, and graph neural networks~\cite{feng2024graphrouter}, to select the optimal LLM by estimating its expected quality and cost. This cost estimation has evolved from relying on static attributes like parameter counts or fixed prices~\cite{song2025irt, shirkavand2025cost} to explicitly predicting output lengths~\cite{nguyen2024metallm, somerstep2025carrot,wu2025efficient} for finer-grained estimation.

Several recent works explore routing beyond model selection. Semantic Router~\cite{wang2025reason} and Think When Needed~\cite{Guo2026ThinkWN} select inference modes within a single LLM rather than routing across multiple LLMs. R2-Reasoner~\cite{shao2025route} decomposes a query into multiple reasoning steps and routes each to a different LLM. Route-To-Reason~\cite{pan2025route} extends routing to (LLM, reasoning strategy) pairs, but predicts cost as an outcome of strategy selection without explicit cost control. \cred{BEST-Route~\cite{ding2025bestroute} improves the cost--quality trade-off by selecting both the LLM and the number of responses to sample, raising quality at the expense of additional inference calls}. HW-Router~\cite{kabir2026hwrouter} proposed to considering the live hardware metrics to estimate the cost more accurately. Despite these advancements, all existing frameworks treat each LLM or configuration as a fixed quality-cost point. \textsc{R2-Router} differs fundamentally: by treating output length as a controllable input via length-constrained instructions, we model each LLM as a quality-cost curve rather than a single point. This enables searching over continuous configurations to discover that powerful LLMs with constrained budgets can outperform weaker models—efficient configurations invisible to prior methods.

\subsection{LLM Routing Dataset} 
RouterBench~\cite{hu2024routerbench} introduces a large-scale dataset containing over 405k inference outcomes from 11 LLMs. RouterEval~\cite{huang2025routereval} collects performance results from 8,500 LLMs across 12 widely used benchmarks. SPROUT~\cite{somerstep2025carrot} aggregates responses from prior datasets under a unified cost annotation framework. Other benchmarks have also contributed to this line of work by using different data collection methods~\cite{feng2025fusionfactory, kassem2025robust, mei2025omnirouter, lu2025routerarena}. However, all existing benchmarks ignore how the same LLM behaves under different output lengths. Our \textsc{R2-Bench} fills this gap by capturing LLM behavior across diverse output lengths, enabling routers to reason about how quality varies with cost.

\section{Dataset Construction: \textsc{R2-Bench}}
Existing routing benchmarks record only a single response for each LLM on each query, making it impossible to train or evaluate routers that reason about cost-dependent quality. To address this, we construct \textsc{R2-Bench}, which records multiple responses under different output length budgets for each LLM on each query. We use \textsc{R2-Bench} for both training and evaluation with a standard train/test split. The statistics of the dataset are detailed in Section~\ref{sec:Dataset}.

\subsection{Dataset Construction Process}
Figure~\ref{f:overview}~(upper) demonstrates the pipeline of dataset construction. \textsc{R2-Bench} integrates queries from 6 widely-used benchmarks, including GPQA~\cite{rein2024gpqa}, MuSR~\cite{sprague2024musr}, MMLU-Pro~\cite{wang2024mmlu}, MATH~\cite{hendrycks2021measuring}, OpenHermes~\cite{OpenHermes-2.5}, and RAGBench~\cite{friel2024ragbench}, covering diverse tasks such as reasoning, knowledge understanding, mathematics, and retrieval-augmented generation. We use a fixed pool of 15 open-source LLMs, spanning a range of model sizes (0.6B to 235B), including both general-purpose and domain-specific models (e.g., math-specialized). See Appendix~\ref{appendix:llm_pool} for the full list; costs follow OpenRouter's official pricing. For each (query, LLM) pair, we collect responses under a set of predefined output length budgets, representing how each model's quality varies with cost. The output lengths are controlled by prompt-based constraints proposed in~\cite{lee2025well, jin2024impact, nayab2024concise, han2024token}, which explicitly instruct the LLM to ``use at most $k$ tokens.'' We empirically validate the effectiveness of this instruction in Appendix~\ref{appendix:compliance}.

To select a reliable LLM judge, we followed the LLM-as-a-judge validation protocol~\cite{zheng2023judging} to randomly sample $500$ responses and collected human annotations from $30$ expert annotators. We then evaluated $4$ candidate LLM judges (GLM-4.5-Air, DeepSeek-V3.1, Qwen3-80B-Instruct, and Llama-3.1-70B-Instruct) by computing their Pearson correlation with human judgments. Among them, Qwen3-80B-Instruct achieved the highest correlation ($\rho=0.82$) and was selected as the final judge for \textsc{R2-Bench}. This LLM-as-a-judge approach is crucial for evaluating open-ended instruction queries, as traditional automatic evaluation methods like exact match fail to capture response quality in such settings.

\begin{figure}[t!]
\centering
\includegraphics[width=0.9\linewidth]{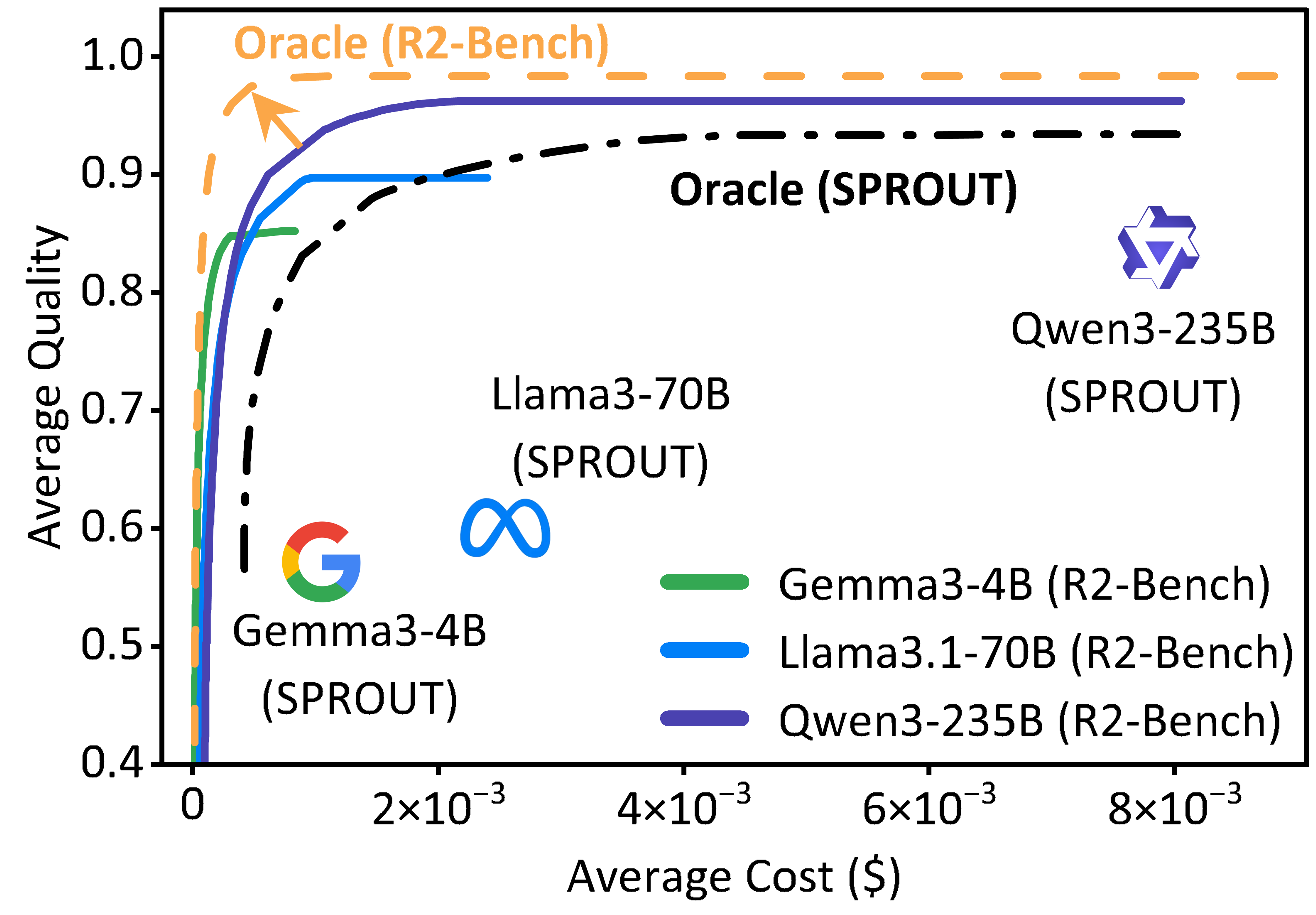}
\caption{Comparison of Oracle performance between \textsc{R2-Bench} and SPROUT\textsuperscript{\ref{ft:a}} on three representative LLMs. SPROUT profiles each LLM as a single point (icons), while \textsc{R2-Bench} captures each LLM as a quality-cost curve (solid lines), enabling per-LLM Oracle selection. 
}
\label{f:oracle}
\vspace{-0.2in}
\end{figure}

\begin{figure*}[t!]
\centering
\includegraphics[width=0.95\linewidth]{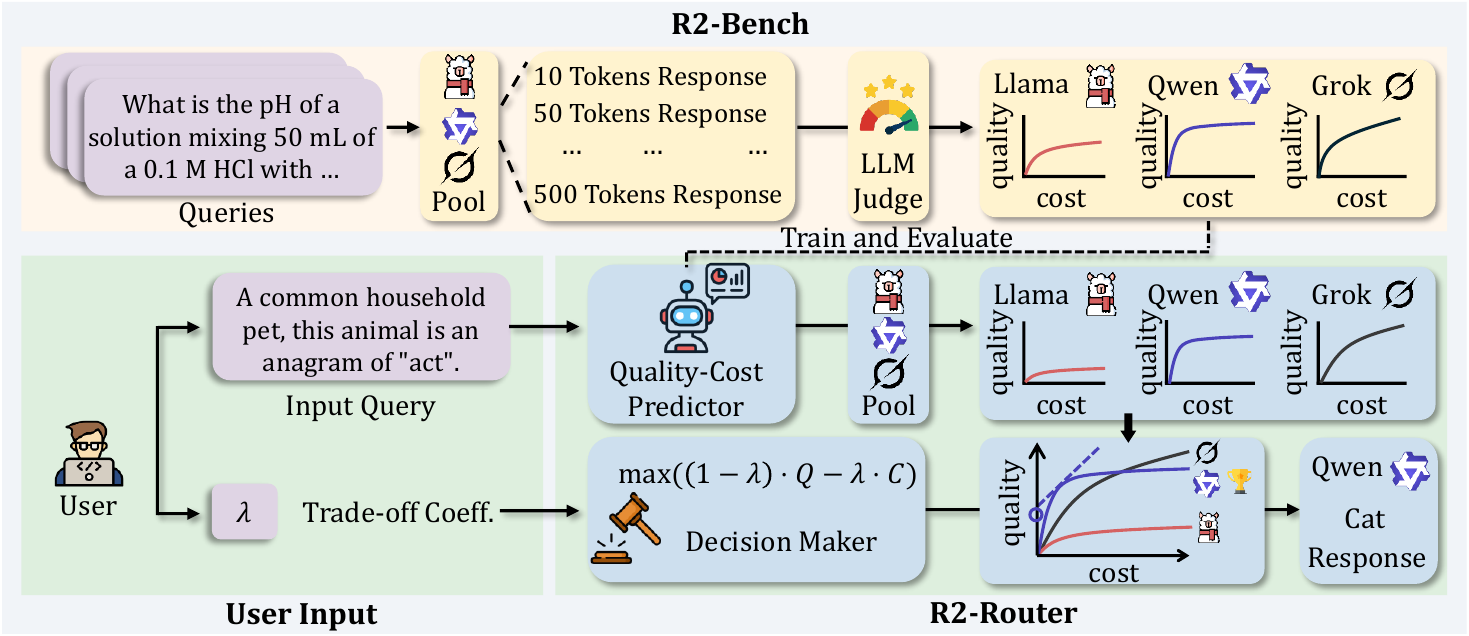}
\caption{An overview of the proposed framework. (Upper) \textsc{R2-Bench} is built by collecting multiple responses from each LLM under different token budgets (e.g., 10, 50, 500 tokens). An LLM-as-a-judge scores these responses to form quality-cost curves for every (query, LLM) pair. (Lower) \textsc{R2-Router} train a shared encoder with per-LLM, cost-specific quality predictors. At inference time, given a query $x$, a user-defined budget limit $B$, and trade-off coefficient $\lambda$, the router predicts the quality-cost curve for each LLM and selects the optimal (LLM, cost) pair $(M^*, C^*)$ that maximizes $S(x, M, C) = (1-\lambda) \cdot \hat{Q}(x, M, C) - \lambda \cdot C$.}
\label{f:overview}
\end{figure*}

\subsection{Improved Oracle Performance}

A key advantage of \textsc{R2-Bench} over prior single-response datasets is the ability to compute finer-grained Oracle performance. Given a trade-off coefficient $\lambda$, the Oracle selects the response that maximizes $(1-\lambda)Q-\lambda C$. On prior benchmarks, each LLM has only one response per query, so the Oracle can only select among LLMs. On \textsc{R2-Bench}, each LLM is represented as a quality-cost curve, enabling the Oracle to select across both LLMs and output lengths.

Figure~\ref{f:oracle} compares Oracle performance on \textsc{R2-Bench} and SPROUT\footnote{\label{ft:a}For fairness, we reconstructed SPROUT using the same LLM pool, judge model, queries, and ground truth answers as \textsc{R2-Bench}.} with three representative LLMs. \textsc{R2-Bench} surpasses SPROUT across all metrics: AUDC improves from $0.85$ to $0.98$, QNC decreases from $0.18$ to $0.04$, and Peak Quality increases from $0.90$ to $0.98$. This improvement comes from two sources: (i) per-LLM Oracle, where the Oracle selects the optimal cost for each LLM on the curve; and (ii) the improved per-LLM Oracle raises the overall upper bound, revealing the full potential of reasoning-capable routing. \cred{\textsc{R2-Bench} and \textsc{R2-Router} are thus complementary: \textsc{R2-Bench} exposes the larger (LLM, budget) optimization space, while \textsc{R2-Router} is the mechanism that searches it. Neither alone yields the gain, as prior routers cannot exploit this space and the space does not exist in prior single-point datasets.}

\section{Routing as Reasoning: \textsc{R2-Router}}
\subsection{Problem Definition}
Existing LLM routing is formulated as follows. Let \(\mathcal{M}=\{M_1, M_2, ..., M_n\}\) denote a set of LLMs and \(\mathcal{X}=\{x_1, x_2, ..., x_m\}\) denote a set of input queries. The goal of an LLM router is to assign each query \(x_i\in\mathcal{X}\) to the most suitable LLM \(M_j\in \mathcal{M}\), achieving higher response quality at lower cost. Each candidate LLM is characterized by two quantities: its response quality \(Q\) and inference cost \(C\). For a given query \(x_i\), the router defines a trade-off score that balances quality and cost, i.e., \(S=(1-\lambda)\cdot Q-\lambda \cdot C\), where \(\lambda\) is a user-defined coefficient that controls cost sensitivity. The router then selects the LLM that maximizes this score. 

This formulation assumes each LLM has a fixed $(Q, C)$ pair per query. In the next section, we extend this formulation by treating output length as a controllable variable.


\subsection{From Reaction to Reasoning}
We extend the routing formulation by recognizing that response quality is functionally dependent on output token length, which can be controlled via prompt-based constraints~\cite{lee2025well}. We empirically validate this assumption in Appendix~\ref{appendix:compliance}. Let $\mathcal{B} = \{b_1, ..., b_K\}$ be a set of token budgets. An LLM's behavior is therefore not a single static point, but a function of the assigned budget. The routing objective transforms from selecting an LLM to finding the optimal (LLM, token budget) pair $(M^*, b^*)$:
\begin{equation}
(M^*, b^*) = \mathop{\arg\max}_{M \in \mathcal{M}, \, b \in \mathcal{B}} \left( (1-\lambda) \cdot Q(x, M, b) - \lambda \cdot C(b) \right)
\end{equation}
Here, $C(b)$ denotes the inference cost corresponding to the token budget $b$, calculated as the product of $b$ and the LLM's per-token cost.

This expansion of the search space provides a theoretical guarantee of superior performance compared to traditional methods.

\begin{definition}[Reactive Routing]
A reactive router predicts a single cost $\hat{c}_i$ (and corresponding quality $\hat{q}_i$) for each LLM $M_i$. Its search space is restricted to a set of fixed operating points:
\begin{equation}
    \mathcal{S}_{reactive} = \{ (M_i, \hat{c}_i) \mid M_i \in \mathcal{M} \}
\end{equation}
\end{definition}

\begin{definition}[Reasoning-based Routing]
A reasoning-based router considers a range of feasible token budgets $\mathcal{B}$ for each LLM. Its search space is the Cartesian product of LLMs and budgets:
\begin{equation}
    \mathcal{S}_{reasoning} = \{ (M_i, b_j) \mid M_i \in \mathcal{M}, b_j \in \mathcal{B} \}
\end{equation}
\end{definition}

\begin{theorem}[Optimization Dominance]
\label{theorem:1}
Let $S^*( \mathcal{S})$ be the maximum utility achievable within a search space $\mathcal{S}$. Assuming the reactive router's predicted cost $\hat{c}_i$ corresponds to one of the feasible budgets in $\mathcal{B}$, then:
\begin{equation}
    S^*(\mathcal{S}_{reasoning}) \ge S^*(\mathcal{S}_{reactive})
\end{equation}
\end{theorem}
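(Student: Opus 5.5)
The plan is to show that, under the stated assumption, $\mathcal{S}_{reactive}$ embeds into $\mathcal{S}_{reasoning}$ while preserving utility. The inequality then follows because a maximum over a superset is at least the maximum over any subset.

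First, fix the utility on both spaces as $U(M,b) = (1-\lambda)\, Q(x,M,b) - \lambda\, C(b)$. A reactive operating point $(M_i,\hat c_i)$ is evaluated with its associated quality $\hat q_i$, so its utility is $(1-\lambda)\hat q_i - \lambda \hat c_i$. The hypothesis says that $\hat c_i$ corresponds to a feasible budget. So for each $i$ I would pick $b_{j(i)} \in \mathcal{B}$ with $C(b_{j(i)}) = \hat c_i$. The quality at that point must be the same one, $Q(x,M_i,b_{j(i)}) = \hat q_i$; I read this as part of ``corresponds to'', since the reactive operating point is exactly the response the LLM produces at that budget. With this choice, define the map $\phi(M_i,\hat c_i) = (M_i, b_{j(i)})$. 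Its image lies in $\mathcal{S}_{reasoning}$, because $M_i \in \mathcal{M}$ and $b_{j(i)} \in \mathcal{B}$. It also satisfies $U(\phi(p)) = U(p)$ for every reactive point $p$.

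Second, since $\mathcal{M}$ and $\mathcal{B}$ are finite, both maxima exist. Let $p^*$ attain $S^*(\mathcal{S}_{reactive})$. Then
\begin{equation*}
S^*(\mathcal{S}_{reasoning}) = \max_{s \in \mathcal{S}_{reasoning}} U(s) \ge U(\phi(p^*)) = U(p^*) = S^*(\mathcal{S}_{reactive}).
\end{equation*}
This completes the argument. I would also remark that the inequality is strict whenever some $(M_i,b_j)$ outside the image of $\phi$ has strictly larger utility, for instance a strong model at a reduced budget.

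The only real obstacle is interpretive rather than technical: the theorem has to be stated so that the reactive point's quality coincides with the reasoning router's quality at the matched budget. I would address this by making that identification explicit in the assumption. If it were not granted, the claim could fail, since a reactive router could assign an arbitrary $\hat q_i$. A second subtlety is that the comparison concerns \emph{true} utilities, or a common quality predictor, not each router's own possibly mis-specified estimates. I would state that both maxima are taken with respect to the same function $U$.
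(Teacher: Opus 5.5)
Your proposal is correct and follows the same route as the paper: under the assumption, each reactive operating point is identified with a point of $\mathcal{S}_{reasoning}$, so $\mathcal{S}_{reactive}\subseteq\mathcal{S}_{reasoning}$, and the maximum over the superset dominates. Your explicit utility-preserving map $\phi$, together with requiring that both maxima use the same utility $U$ with $Q(x,M_i,b_{j(i)})=\hat q_i$, states precisely what the paper's one-line inclusion argument leaves implicit.
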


\textit{Proof.} The reactive router locks each LLM to a single predicted cost $\hat{c}_i$. The reasoning-based router searches over all $b_j \in \mathcal{B}$. Since $\hat{c}_i$ corresponds to one of the feasible budgets, we have $\mathcal{S}_{reactive} \subseteq \mathcal{S}_{reasoning}$. The result follows since the maximum over a superset is at least as large as the maximum over any subset.

Theorem~\ref{theorem:1} highlights that reactive routers constrain the solution space. For example, a powerful $M_{large}$ might be rejected by a reactive router because its predicted cost is too high. However, \textsc{R2-Router} can discover that $M_{large}$ with a constrained budget still outperforms a smaller $M_{small}$ while satisfying the cost constraint. This solution lies in $\mathcal{S}_{reasoning}$ but outside $\mathcal{S}_{reactive}$.

\cred{We emphasize that ``reasoning'' here refers to the \emph{router's} decision process over the (LLM, budget) space, i.e., deliberating about how each LLM's quality would change under different budgets before committing to a choice, rather than to LLM-internal System-2 reasoning such as chain-of-thought. This is the sense in which \textsc{R2-Router} evolves from a reactive selector into a deliberate reasoner.}

\subsection{Methodology of \textsc{R2-Router}}
As illustrated in Figure~\ref{f:overview}~(lower), \textsc{R2-Router} takes a query and a user-specified trade-off coefficient $\lambda$ as input. A Quality-Cost Predictor predicts the quality $\hat{Q}(x, M, C)$ that each LLM can achieve at different costs\footnote{Here $C$ denotes the \emph{output cost}, i.e., the output length multiplied by the LLM's per-token price. The input cost (proportional to the input length) is fixed and thus does not affect $\hat{Q}$, but is included as a constant term in the score $S(x,M,C)$.}, forming a quality-cost curve for each LLM. Then, a Decision Maker searches over all LLMs and costs to find the optimal pair that maximizes the trade-off score $S = (1-\lambda) \cdot Q - \lambda \cdot C$.

\subsubsection{Architecture}
\textsc{R2-Router} employs a shared encoder to obtain the query representation $z_x = \text{Enc}(x)$. To approximate the quality-cost curve, \textsc{R2-Router} defines a set of $K$ discrete anchor costs $\mathcal{B} = \{b_1, ..., b_K\}$. For each LLM $M_i$, \textsc{R2-Router} uses a multi-head quality predictor with $K$ independent heads. The $k$-th head $g_{i,k}(\cdot)$ predicts the quality of LLM $M_i$ under cost $b_k$. Each head is a three-layer MLP with ReLU activations and a Sigmoid output to ensure the prediction falls within $[0,1]$:
\begin{equation}\small
    \hat{Q}(x, M_i, b_k) = \sigma(g_{i,k}(z_x)), \quad \sigma: \mathbb{R} \to [0,1]
\end{equation}
This design allows the shared encoder to capture general query semantics, while the independent heads predict LLM behaviors at different costs.

\subsubsection{Training}
The heads are trained using Mean Squared Error (MSE) on \textsc{R2-Bench}, which provides multiple quality-cost pairs for the same $(x, M)$ under different costs. For a query $x$, LLM $M_i$, and token budget $b_k$, the optimization objective is:
\begin{gather}\small
\mathcal{L}_{i,k}
= \mathrm{MSE}\bigl(\hat Q(x,M_i,b_k),\, Q^{\text{true}}(x,M_i,b_k)\bigr),\\
\theta_{i,k}^{*}
= \arg\min_{\theta_{i,k}} \mathcal{L}_{i,k}, \qquad \forall\, i,\,k.
\end{gather}
\cred{Because the predictor is trained on each LLM's observed quality at every budget, \textsc{R2-Router} does not assume that all queries can be safely compressed: if a query requires a detailed response, the quality at tight budgets is low, and the router accordingly selects a larger budget or a different LLM. The gain therefore comes from exploring what each LLM can achieve at different budgets, not from inducing shorter outputs.}

\subsubsection{Routing}
\label{sec:inter}

\noindent\textbf{Discrete Routing.} In the basic setting, the routing decision finds the optimal (LLM, token budget) pair $(M^*, b^*)$ from the predefined set $\mathcal{B}$ that maximizes the trade-off score:
\begin{equation}\small
    (M^*, b^*) = \mathop{\arg\max}_{M_i \in \mathcal{M}, \, b_k \in \mathcal{B}} \left( (1-\lambda) \cdot \hat{Q}(x, M_i, b_k) - \lambda \cdot C(b_k) \right)
\label{eq:dis}
\end{equation}
\noindent\textbf{Continuous Extension via Interpolation.} \cred{Interpolation is an \emph{optional} component: all results in this paper use discrete routing over $K=16$ anchors (Equation~\ref{eq:dis}), and interpolation only serves to reduce the number of anchors needed when offline data collection is costly.} A naive approach to cover the full cost spectrum would require a large number of prediction heads (large $K$), demanding extensive training data. To achieve a large search space with minimal overhead, \textsc{R2-Router} employs a sparse set of anchor costs (small $K$) and bridges them via interpolation. For any token budget $b' \in [b_k, b_{k+1}]$, the quality is estimated as:
\begin{equation}\small
    \hat{Q}(x, M, b') = (1-\alpha) \cdot \hat{Q}(x, M, b_k) + \alpha \cdot \hat{Q}(x, M, b_{k+1})
\end{equation} 
where $\alpha = \frac{b' - b_k}{b_{k+1} - b_k}$. This relaxes Equation~\ref{eq:dis} to search over the continuous spectrum $b' \in [0, B_{user}]$ rather than the discrete set $\mathcal{B}$. 
After selecting the optimal pair $(M^*, b^*)$, \textsc{R2-Router} invokes LLM $M^*$ and enforces the cost constraint through the prompt instruction ``Use at most $b^*$ tokens.'' following~\citet{lee2025well}.



\subsubsection{Integrate with Existing score Computing}
\label{sec:universal}
The reasoning capability of \textsc{R2-Router} is orthogonal to existing quality predictors. Whether the quality is estimated through regression~\cite{somerstep2025carrot}, similarity search~\cite{jitkrittum2025universal}, or probabilistic modeling~\cite{song2025irt}, \textsc{R2-Router} can extend each method from predicting a single value to predicting a quality-cost curve.

As a case study, we integrate \textsc{R2-Router} with UniRouter~\cite{jitkrittum2025universal}, which addresses routing with dynamically changing LLM pools where new LLMs may arrive without retraining. UniRouter represents each LLM as a feature vector based on its prediction error on a validation set, enabling generalization to unseen LLMs. \textsc{R2-Router} extends this representation from a single operating point to a quality-cost curve: instead of computing $\Phi(h)$ as the average error at one default cost, we compute $\phi_b(h)$ for multiple token budgets $b \in \mathcal{B}$, yielding a curve representation for each LLM. This integration improves UniRouter's performance while preserving its ability to handle dynamic LLM pools.


\section{Experimental Setup}
\subsection{Dataset and Benchmark}\label{sec:Dataset}
We evaluate our method on the proposed \textsc{R2-Bench}, an extension of the SPROUT benchmark dataset. \textsc{R2-Bench} contains 30,968 queries spanning 20 categories from 6 diverse benchmarks, covering mathematical reasoning, general knowledge, graduate-level science, and other domains (see Appendix~\ref{app:dataset} for details).

For each query, we collect responses from 15 LLMs under 16 different cost levels: \{10, 20, 30, 40, 50, 80, 100, 150, 200, 300, 500, 800, 1200, 2000, 4000, default\} tokens. The cost is constrained by the instructional prompt ``use at most $k$ tokens" and enforced by truncation. The "default" setting allows LLMs to respond without the instructional prompt but with a maximum of 4000 tokens. Each response is annotated with: (i) a quality score (0 to 1) using Qwen3-80B-Instruct as the judge, and (ii) the actual token count consumed during generation.

We use a heterogeneous LLM pool consisting of 15 LLMs, spanning model sizes from 0.6B to 235B parameters. The pool includes both general-purpose models (e.g., GLM-4.6, LLaMA-3.1-70B-Instruct, Qwen3-235B-A22B-Instruct) and domain-specific models (e.g., Qwen2.5-Math-7B-Instruct). The per-token costs follow the price list on OpenRouter; full details can be found in Appendix~\ref{appendix:llm_pool}.

\subsection{Baseline Methods}
We compare \textsc{R2-Router} with top-performing routing methods from the RouterArena benchmark~\cite{lu2025routerarena}. All baselines are \textit{reactive} routers that predict fixed quality and cost for each LLM without reasoning about cost-dependent quality.

\noindent\textbf{MIRT-IRT and NIRT-IRT}~\cite{song2025irt}: Based on Multidimensional Item Response Theory and Neural Cognitive Diagnosis, respectively. These achieve Top-1 and Top-6 on RouterArena. They predict quality $\hat{Q}_i(x)$ but treat cost as a fixed per-LLM constant $C_i$, selecting the LLM that maximizes $(1-\lambda)\cdot\hat{Q}_i(x) - \lambda\cdot C_i$.

\noindent\textbf{CARROT-KNN (-K) and CARROT-Linear (-L)}~\cite{somerstep2025carrot}: Top-3 on RouterArena. CARROT uses separate predictors for quality $\hat{Q}_i(x)$ and output token count $\hat{T}_i(x)$, selecting LLMs by maximizing $(1-\lambda)\cdot\hat{Q}_i(x) - \lambda \cdot C_i \cdot \hat{T}_i(x)$, where $C_i$ is the per-token price. We evaluate two variants using $k$-nearest neighbor ($k=5$) and linear regression.

\noindent\textbf{UniRouter}~\cite{jitkrittum2025universal}: Designed for dynamic LLM pools where new LLMs may arrive without retraining. UniRouter represents each LLM as a feature vector based on its prediction error on a validation set, enabling generalization to unseen LLMs. We also evaluate \textsc{Uni-R2Router}, which integrates \textsc{R2-Router}'s reasoning capability with UniRouter (Section~\ref{sec:universal}).

\subsection{Implementation Details}
Queries are encoded into 1024-dimensional embeddings using Qwen3-Embedding-0.6B. These embeddings serve as input features for both baselines and \textsc{R2-Router}. For \textsc{R2-Router}, we train a separate three-layer MLP for each cost level, with hidden dimensions $[256, 128, 64]$. Each MLP predicts the quality score at its designated cost. All predictors are optimized using MSE loss with Adam optimizer (learning rate $1 \times 10^{-4}$) for 100 epochs.

\subsection{Computational Resources}
The construction of \textsc{R2-Bench} requires collecting responses from multiple LLMs under different token budgets. While we used 8 NVIDIA B200 GPUs for efficient data collection, the dataset can alternatively be constructed via API calls (e.g., through OpenRouter). In contrast, training \textsc{R2-Router} itself is lightweight since it only involves MLPs without any large language models. Using a single NVIDIA RTX 3090, training routers for 15 LLMs takes approximately 30 minutes. At inference time, the routing overhead is negligible: routing a single query takes less than 400 ms on average, accounting for less than 1\% of the total LLM generation time. \cred{\textsc{R2-Router} also scales cheaply to a growing LLM pool: adding a new model requires only a lightweight validation-set profiling (e.g., $<\$50$ in API cost plus $\sim$30 min of training on a single RTX 3090 to add Gemini~3~Pro Preview), and re-estimation is needed only when a model is updated or the query distribution shifts substantially.}


\subsection{Evaluation Metrics}
We evaluate each routing strategy using a deferral curve~\cite{jitkrittum2025universal}, which plots the average response quality against the total inference cost \cred{(input plus output tokens). While \textsc{R2-Router} optimizes over output cost, since it is the component that varies with the budget, all reported costs are total costs, so the $4$--$5\times$ reduction is measured on total rather than output-only cost}. Sweeping the routing penalty parameter $\lambda$ over the interval $[0, 1]$ (Equation~\ref{eq:dis}) traces the deferral curve. Following~\cite{jitkrittum2025universal}, we employ three evaluation metrics: (i) Area Under the Deferral Curve (AUDC), measuring the overall quality-cost trade-off; (ii) Peak Quality, the maximum achievable quality; and (iii) Query-Normalized Cost (QNC), the minimum relative cost required to match the performance of the most accurate LLM in the pool. All experiments are conducted over $5$ independent runs with different random seeds, and we report the mean and standard deviation of the results.

\section{Results}
\subsection{Main Results}



\noindent\textbf{\textsc{R2-Router} achieves state-of-the-art Performance.} As shown in Figure~\ref{f:deferral_curves}, \textsc{R2-Router} consistently achieves higher quality at any given cost compared to all baselines. Notably, \textsc{R2-Router} reaches an average quality of 0.8 at a minimal cost of approximately $0.5 \times 10^{-3}$. In contrast, reactive baselines require $4\times$ to $5\times$ that budget to match similar performance. By reasoning about cost-dependent quality, \textsc{R2-Router} identifies efficient operating points (e.g., powerful LLMs with constrained costs) that reactive routers inherently overlook. \cred{We further corroborate these gains on the RouterArena benchmark~\cite{lu2025routerarena}, where, under exact-match scoring (no LLM judge) and an expanded pool of reasoning-capable LLMs, \textsc{R2-Router} leads all baselines across all $7$ categories; it also extends naturally to thinking-token budgets on reasoning LLMs (Appendix~\ref{appendix:additional}).}

\begin{figure}[ht!]
\centering
\vspace{-0.1in}
\includegraphics[width=0.8\linewidth]{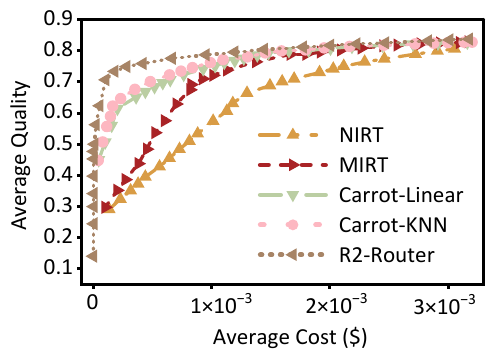}
\caption{Deferral curves comparing routing methods. \textsc{R2-Router} achieves consistently higher quality at lower costs than point-based routing methods.}
\label{f:deferral_curves}
\vspace{-0.2in}
\end{figure}

\subsection{Generalization to New LLMs}
To evaluate generalization, we simulate a dynamic setting transitioning from an initial pool of 6 LLMs to an expanded pool with 5 unseen additions. UniRouter~\cite{jitkrittum2025universal} generalizes by embedding validation error to profile the capabilities of each LLM. We integrate this mechanism into \textsc{R2-Router} (termed \textsc{Uni-R2Router}) to predict quality-cost curves across 16 token budgets. Figure~\ref{f:generlization_new_llms} demonstrates that \textsc{Uni-R2Router} effectively leverages the expanded pool to significantly outperform the point-based UniRouter, achieving a higher AUDC (0.623 vs. 0.590) while reducing QNC by 80\%. This validates our proposition in Section~\ref{sec:universal} that \textsc{R2-Router} enhances existing frameworks' efficiency while preserving dynamic adaptability.

\begin{figure}[ht!]
\includegraphics[width=\linewidth]{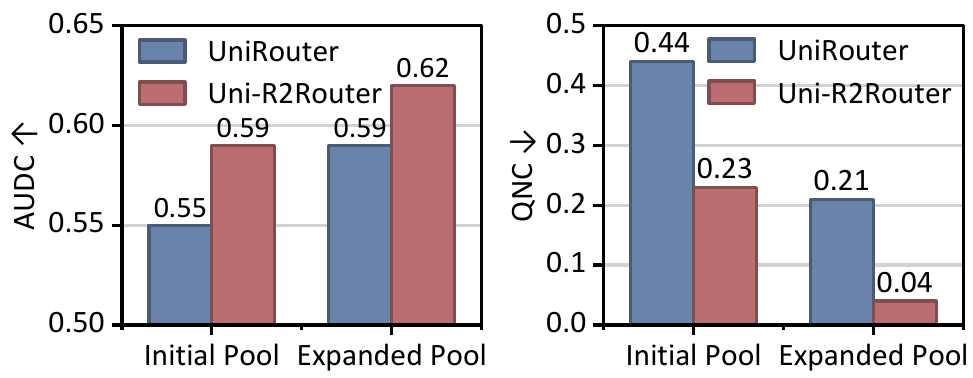}
\caption{Generalization Capabilities. This experiment compares routing on an Initial Pool (GLM-4.6, Llama-3.1-70B, gemma-3-4b, Qwen2.5-Math-1.5B, Qwen3-0.6B, gemma-3-270m) versus an Expanded Pool that adds 5 unseen models (GLM-4.5-Air, Mistral-7B-v0.2, Qwen2.5-Math-7B, Llama-3.2-3B, gemma-3-1b).}
\label{f:generlization_new_llms}
\end{figure}

\subsection{Generalization to Out-of-Distribution Queries}
To evaluate OOD generalization, we split MMLU-Pro into STEM (training) and non-STEM (testing) disciplines. As shown in Table~\ref{tab:ood_results}, \textsc{R2-Router} outperforms all baselines, achieving an AUDC of 0.71 compared to 0.67 for the next-best method CARROT-L. This demonstrates \textsc{R2-Router}'s cost-efficiency and robustness even under significant distributional shifts.

\input{tables/ood_results}


\subsection{Interpolation with Limited Points}

As proposed in Section~\ref{sec:inter}, \textsc{R2-Router} can approximate continuous quality-cost curves via piecewise linear interpolation over a discrete set of anchor points. Figure~\ref{f:interpolation} analyzes the sensitivity of routing performance to the number of trained heads $K$. We observe that \textsc{R2-Router} is highly data-efficient: QNC converges rapidly, reaching near-optimality ($\approx$ 0.12) with as few as 6 to 8 anchor points. Crucially, even a coarse approximation with $K=4$ significantly outperforms point-based baselines like MIRT (QNC=0.43) and CARROT-L (QNC=0.32). This validates that a minimal set of trained heads is sufficient to effectively model the quality-cost landscape.

\begin{figure}[ht!]
\includegraphics[width=0.9\linewidth]{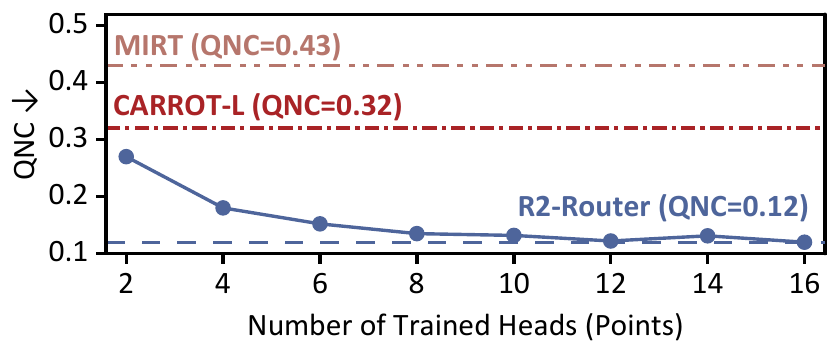}
\caption{Performance comparison of \textsc{R2-Router} with varying numbers of interpolation heads ($K$) against point-based baselines. \textsc{R2-Router} approximates the continuous quality-cost curve via piecewise linear interpolation using the $K$ anchor points.}
\label{f:interpolation}
\end{figure}

\subsection{Ablation Study}
To verify the robustness of \textsc{R2-Router} to component choices, we conduct ablation studies on the embedding model and the quality predictor architecture.

\noindent\textbf{Robustness to Embedding Models.} 
Our default implementation utilizes \texttt{Qwen3-Embedding-0.6B} to extract query features. We investigate the impact of the semantic encoder by substituting it with a smaller alternative, \texttt{MiniLM-L6-v2}. As shown in Table~\ref{tab:new_em}, although the absolute metric values fluctuate due to different embedding dimensions and capabilities, \textsc{R2-Router} consistently maintains a substantial lead over point-based baselines. For instance, \textsc{R2-Router} achieves the lowest QNC of 0.32 with MiniLM, demonstrating that our framework generalizes well across different embedding spaces.

\input{tables/EM}

\noindent\textbf{Impact of Predictor Choice.} 
We typically employ a 3-layer MLP as the quality predictor. To study the influence of the predictor architecture, we experiment with a gradient-boosting decision tree-based regressor (LGBM). The results in Table~\ref{tab:new_Predictor} indicate that \textsc{R2-Router} is model-agnostic regarding the regression head. Notably, using LGBM further boosts the AUDC to 0.80 while maintaining a low QNC of 0.29, confirming that \textsc{R2-Router}'s performance gains stem from the curve-based optimization strategy rather than a specific predictor architecture choice.
\input{tables/new_Predictor}

\noindent\textbf{Robustness to Judge Selection.} To mitigate potential bias from the LLM judge, we additionally evaluate all methods using \texttt{DeepSeek-V3.1} as the judge at test time, while keeping the training labels from \texttt{Qwen3-80B-Instruct} unchanged. As shown in Table~\ref{tab:LLM_j}, \textsc{R2-Router} consistently outperforms reactive baselines under this different judge, demonstrating robustness to the choice of evaluation model.

\input{tables/LLM_j}

\noindent\textbf{Impact of Length-Constraint Prompts.} To verify that our efficiency gains stem from reasoning-based routing rather than merely prompting models to be concise, we apply the length-constraint prompt (e.g., ``Be concise'') to the input of the reactive baseline. The results in Table~\ref{tab:prompt_ablation} show that \textsc{R2-Router} consistently outperforms the prompted baseline. The reason is that while length-constraint prompts affect LLM outputs, they do not influence the selection logic of reactive routers. Reactive routers still rely on static cost estimates, assuming large models are inherently expensive and excluding them from consideration. Consequently, efficient configurations involving powerful LLMs are never selected, whereas \textsc{R2-Router} correctly identifies and leverages these opportunities.
\input{tables/prompt}

\section{Conclusion}
We identify a key limitation of existing LLM routers: they assume each LLM has only one fixed behavior, overlooking that quality varies with output length. To address this, we propose \textsc{R2-Router}, which reasons about each LLM's quality across different token budgets, transforming routing from selecting among fixed points to searching over quality-cost curves. We also introduce \textsc{R2-Bench}, the first routing dataset capturing LLM behavior across diverse token budgets. Experiments show that \textsc{R2-Router} achieves state-of-the-art performance at $4-5\times$ lower cost, generalizes well to new LLMs and out-of-distribution queries, and can be integrated with existing routers as a plug-in module.







\section*{Acknowledgements}

The authors thank the anonymous reviewers for their constructive feedback.
This material is based upon work supported in part by the National Science Foundation (NSF) under Grant Nos. CCF-2523407 and CNS-2413232. Jiaqi Xue and Qian Lou are partially supported by these grants. H. Huang was partially supported by NSF IIS-2347592, IIS-2348169, DBI-2405416, CCF-2348306, CNS-2347617, and RISE-2536663.


\section*{Impact Statement}
This paper presents work whose goal is to advance the field of Machine Learning, specifically improving the cost-efficiency of LLM inference through intelligent routing. By enabling high-quality responses at reduced computational cost, our work may contribute to making LLM capabilities more accessible and reducing the environmental footprint of AI systems. 

Beyond output length, the core idea of treating LLM behavior as controllable rather than fixed may extend to other variables that influence quality and cost, such as system prompts, decoding strategies, or reasoning depth. We hope this work inspires further research on reasoning-based routing that explores the full space of LLM configurations. We do not foresee any specific negative societal consequences that must be highlighted here.
\bibliography{example_paper}
\bibliographystyle{icml2026}

\newpage
\appendix
\onecolumn
\section{Effectiveness of Length-Constrained Instructions}
\label{appendix:compliance}
A core assumption of \textsc{R2-Router} is that output length can be controlled via prompt-based instructions. To validate this, we analyze the compliance rate (percentage of responses where actual length $\leq$ 1.1 $\times$ budget) across all 15 LLMs and 5 token budgets. Figure~\ref{f:compliance_heatmap} presents the compliance heatmap. We observe two key findings:

\noindent\textit{(1) Large models reliably follow length constraints.} Qwen3-235B and DeepSeek-V3 achieve compliance rates above 82\% even at the most restrictive budget (10 tokens), and above 97\% at moderate budgets ($\geq$100 tokens).

\noindent\textit{(2) Small models show lower compliance at tight budgets.} Models below 4B struggle significantly, with compliance dropping to 3\%--21\% at budget 10.

Crucially, finding (2) does not undermine our method. The key insight of \textsc{R2-Router} is to \textit{unlock the potential of powerful LLMs at reduced cost}—not to make small models cheaper. Small models are already inexpensive, so there is little benefit in constraining their output further. The value of \textsc{R2-Router} lies in discovering that large models with constrained budgets can outperform small models at comparable cost. Finding (1) validates exactly this: powerful LLMs can be effectively constrained to operate at reduced cost while maintaining controllability.

Furthermore, \textsc{R2-Router} naturally learns to avoid unreliable configurations. Since the router is trained on actual responses under length constraints, it implicitly learns each model's compliance behavior. When a model frequently exceeds its budget, the observed cost is higher than intended, and the learned quality-cost curve reflects this reality.

\begin{figure*}[h!]
\centering
\includegraphics[width=0.95\linewidth]{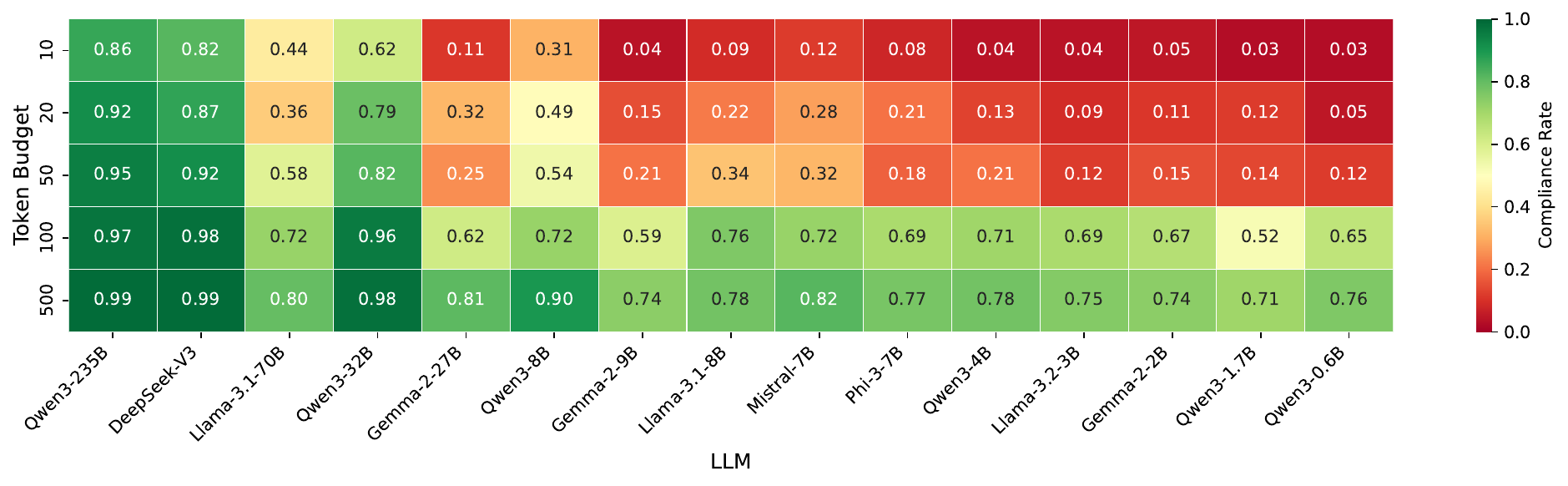}
\caption{Compliance rate of length-constrained instructions across 15 LLMs and 5 token budgets. Large models (left) reliably follow length constraints, validating the core mechanism of \textsc{R2-Router}. Lower compliance of smaller models at tight budgets does not affect our method, as \textsc{R2-Router}'s value lies in unlocking powerful LLMs at reduced cost.}
\label{f:compliance_heatmap}
\end{figure*}

\section{Details of Experiments}
\subsection{Datasets Composition}
\label{app:dataset}
\textsc{R2-Bench} integrates queries from 6 widely-used benchmarks spanning 20 categories:
\begin{itemize}[leftmargin=*, nosep, topsep=0pt, partopsep=0pt]
\item \texttt{MMLU-Pro}\cite{wang2024mmlu} (8,264 queries): Professional-level multiple-choice questions across STEM, humanities, social sciences, and other areas.
\item \texttt{OpenHermes}\cite{OpenHermes-2.5} (13,670 queries): General knowledge and reasoning tasks.
\item \texttt{MATH}\cite{hendrycks2021measuring} (5,122 queries): Mathematical problem-solving.
\item \texttt{GPQA}\cite{rein2024gpqa} (384 queries): Graduate-level science questions.
\item \texttt{MuSR}\cite{sprague2024musr} (100 queries): Multi-step soft reasoning tasks.
\item \texttt{RAGBench}\cite{friel2024ragbench}: Retrieval-augmented generation tasks.
\end{itemize}
\subsection{LLM Pool Details}
\label{appendix:llm_pool}

Table~\ref{tab:llm_pool} lists the 15 LLMs used in \textsc{R2-Bench}, along with their per-token costs from OpenRouter.\footnote{\url{https://openrouter.ai/}} The pool spans model sizes from 0.6B to 235B parameters, including both general-purpose models and domain-specific ones (e.g., Qwen2.5-Math for mathematical reasoning). Pricing is subject to change; values shown were retrieved in January 2026.

\begin{table}[h]
\centering
\caption{LLM pool used in \textsc{R2-Bench} with per-token pricing.}
\label{tab:llm_pool}
\begin{tabular}{lcc}
\toprule
\textbf{LLM} & \textbf{Input (\$/1M)} & \textbf{Output (\$/1M)} \\
\midrule
Qwen3-0.6B & 0.07 & 0.46 \\
Gemma-3-1B & 0.01 & 0.04 \\
Qwen2.5-Math-1.5B-Instruct & 0.01 & 0.02 \\
LLaMA-3.2-3B-Instruct & 0.02 & 0.02 \\
Gemma-3-4B-it & 0.02 & 0.07 \\
Mistral-7B-v0.2 & 0.20 & 0.20 \\
Qwen2.5-Math-7B-Instruct & 0.03 & 0.09 \\
GLM-4.5-Air & 0.35 & 1.55 \\
GLM-4.6 & 0.44 & 1.76 \\
LLaMA-3.1-70B-Instruct & 0.12 & 0.30 \\
Qwen3-235B-A22B-Instruct & 0.18 & 0.54 \\
\bottomrule
\end{tabular}
\end{table}

\section{Additional Experiments}
\label{appendix:additional}
This section reports additional experiments conducted during the review period. Unless otherwise stated, all results use discrete routing over the $K=16$ anchor budgets (Equation~\ref{eq:dis}) without interpolation.

\subsection{Evaluation on RouterArena}
\label{appendix:routerarena}
To assess \textsc{R2-Router} on a broader set of tasks and to rule out any bias from the LLM judge, we additionally evaluate on RouterArena~\cite{lu2025routerarena}, a comprehensive routing benchmark covering $8{,}400$ queries across $7$ diverse categories and $5$ evaluation dimensions (accuracy, cost, optimality, robustness, and latency). Unlike \textsc{R2-Bench}, RouterArena scores responses via exact match against ground-truth answers, so no LLM judge is involved, directly eliminating any intra-family judge bias. We further expand the LLM pool to include reasoning-capable models (e.g., Gemini-2.5-Flash and Claude-3). As shown in Table~\ref{tab:routerarena}, \textsc{R2-Router} outperforms the strongest baseline (CARROT-L) across all $7$ categories, with gains ranging from $+4.1$ to $+8.1$.

\begin{table}[h]
\centering
\caption{Per-category Arena Scores on RouterArena~\cite{lu2025routerarena}. \textsc{R2-Router} leads in every category under exact-match scoring (no LLM judge).}
\label{tab:routerarena}
\begin{tabular}{lccccccc}
\toprule
\textbf{Method} & \textbf{Knowledge} & \textbf{Math} & \textbf{Domain} & \textbf{Code} & \textbf{Trivia} & \textbf{Translation} & \textbf{NLU} \\
\midrule
\textsc{R2-Router} & 78.2 & 76.0 & 71.5 & 69.7 & 66.5 & 65.3 & 64.2 \\
CARROT-L & 73.7 & 67.9 & 65.4 & 62.0 & 61.9 & 60.2 & 60.1 \\
\midrule
\textit{Gap} & +4.5 & +8.1 & +6.1 & +7.7 & +4.6 & +5.1 & +4.1 \\
\bottomrule
\end{tabular}
\end{table}

\subsection{Extension to Reasoning LLMs}
\label{appendix:reasoning_llm}
\textsc{R2-Router} controls reasoning capacity, not merely answer verbosity. For non-reasoning LLMs, the reasoning process is expressed directly in output tokens, so constraining output tokens constrains reasoning. For reasoning LLMs, reasoning capacity is governed by the thinking-token budget, which several providers already expose as a controllable parameter: Anthropic (\texttt{budget\_tokens}, 1K--128K), Google Gemini 2.5 (\texttt{thinkingBudget}, 128--32K), and Qwen3 (\texttt{thinking\_budget}, 1--82K). \textsc{R2-Router}'s pipeline extends to these models directly: we train it with thinking-token budgets as anchor points on reasoning-capable models (including Gemini-2.5 and Qwen3-235B), learning to predict both quality and cost at each thinking budget and enabling the same curve-based routing as for output tokens. On RouterArena with this extended pool, \textsc{R2-Router} achieves an Arena Score of $71.6$, compared to $66.9$ (MIRT) and $63.9$ (CARROT-L).

\subsection{Robustness to Model-Specific System Prompts}
\label{appendix:sysprompt}
In practice, \textsc{R2-Router}'s budget instruction (``use at most $k$ tokens'') can be combined with any model-specific system prompt, as nearly all LLM APIs support custom system prompts. The two serve different purposes: model-specific prompts define style and format, while the budget instruction guides output length. Any interaction between them is implicitly captured by the quality predictor, which is trained on actual length-guided outputs under the same prompting setup. To validate this, we conduct an experiment in which each LLM receives its own system prompt (e.g., Qwen3-235B: ``Respond in structured format''; Gemini-2.5: ``Please be thorough and cite sources''; Llama-3.1-70B: ``Respond concisely without step-by-step reasoning''), and all routers are trained and evaluated under this heterogeneous per-LLM prompting setup. \textsc{R2-Router} (AUDC $0.81$) still outperforms the best baseline (AUDC $0.76$), confirming that the routing advantage holds even when each LLM operates under its own system prompt.

\subsection{Interpolation: PCHIP vs.\ Piecewise Linear}
\label{appendix:pchip}
Interpolation is an \emph{optional} component of \textsc{R2-Router}: all results in the main text are obtained via discrete routing over $K=16$ anchor budgets without interpolation (Section~\ref{sec:inter}). Interpolation is studied separately to explore whether the number of required anchors can be reduced, lowering offline data-collection cost while maintaining routing performance. Because the quality--cost relationship can be non-convex, piecewise linear interpolation (PLI) introduces approximation error in such regions; this is a deliberate trade-off between accuracy and data efficiency. We compare PLI against PCHIP (monotone cubic interpolation) at non-anchor budgets using $K=8$ anchors (Table~\ref{tab:pchip}). PCHIP achieves both lower prediction error and better routing QNC. We therefore adopt PCHIP for interpolation. Notably, even PLI already far outperforms all point-based baselines (QNC $0.14$ vs.\ MIRT $0.43$, CARROT-L $0.32$).

\begin{table}[h]
\centering
\caption{PLI vs.\ PCHIP interpolation at non-anchor budgets ($K=8$ anchors). Lower is better for both metrics.}
\label{tab:pchip}
\begin{tabular}{lcc}
\toprule
\textbf{Interpolation} & \textbf{Prediction Error} & \textbf{QNC} \\
\midrule
Piecewise Linear (PLI) & 0.09 & 0.14 \\
PCHIP (monotone cubic) & \textbf{0.05} & \textbf{0.12} \\
\bottomrule
\end{tabular}
\end{table}


\end{document}


%% file: tables/ood_results.tex

\begin{table}[h!]
\centering
\footnotesize
\caption{Performance comparison on OOD queries.}
\begin{tabular}{lccc}\toprule
& AUDC $\uparrow$ & QNC $\downarrow$ & Peak Acc. $\uparrow$\\\midrule
MIRT & 0.63$_{\pm 0.05}$ & 0.72$_{\pm 0.08}$ & 0.81$_{\pm 0.00}$ \\
CARROT-L & 0.67$_{\pm 0.04}$ & 0.56$_{\pm 0.06}$ & 0.82$_{\pm 0.00}$ \\
\textsc{R2-Router} & 0.71$_{\pm 0.05}$ & 0.26$_{\pm 0.09}$ & 0.83$_{\pm 0.00}$ \\\bottomrule
\end{tabular}
\label{tab:ood_results}
\end{table}

%% file: tables/EM.tex
\begin{table}[h!]
\centering
\footnotesize
\caption{Using \texttt{MiniLM-L6-v2} as the embedding model.}
\begin{tabular}{lccc}\toprule
& AUDC $\uparrow$ & QNC $\downarrow$ & Peak Acc. $\uparrow$ \\\midrule
MIRT & 0.71$_{\pm 0.05}$ & 0.51$_{\pm 0.03}$  & 0.79$_{\pm 0.00}$  \\
CARROT-L  & 0.72$_{\pm 0.02}$ & 0.48$_{\pm 0.02}$ & 0.79$_{\pm 0.00}$  \\
\textsc{R2-Router} & 0.76$_{\pm 0.04}$ & 0.32$_{\pm 0.04}$ & 0.79$_{\pm 0.00}$  \\\bottomrule
\end{tabular}
\label{tab:new_em}
\end{table}

%% file: tables/new_Predictor.tex
\begin{table}[h!]
\centering
\footnotesize
\caption{Performance using LGBM as prediction head.}
\begin{tabular}{lccc}\toprule
& AUDC $\uparrow$ & QNC $\downarrow$ & Peak Acc. $\uparrow$ \\\midrule
MIRT & 0.74$_{\pm 0.03}$ & 0.78$_{\pm 0.02}$ & 0.81$_{\pm 0.00}$\\
CARROT-L & 0.77$_{\pm 0.02}$ & 0.66$_{\pm 0.02}$ & 0.80$_{\pm 0.00}$ \\
\textsc{R2-Router} & 0.80$_{\pm 0.03}$ & 0.29$_{\pm 0.03}$ & 0.81$_{\pm 0.00}$ \\\bottomrule
\end{tabular}
\label{tab:new_Predictor}
\end{table}

%% file: tables/LLM_j.tex
\begin{table}[h!]
\centering
\footnotesize
\caption{Performance using \texttt{DeepSeek-V3.1} as the LLM judge.}
\begin{tabular}{lccc}\toprule
& AUDC $\uparrow$ & QNC $\downarrow$ & Peak Acc. $\uparrow$ \\\midrule
MIRT & 0.76$_{\pm 0.03}$ & 0.55$_{\pm 0.02}$ & 0.88$_{\pm 0.00}$\\
CARROT-L & 0.75$_{\pm 0.04}$ & 0.52$_{\pm 0.03}$ & 0.87$_{\pm 0.00}$ \\
\textsc{R2-Router} & 0.80$_{\pm 0.04}$ & 0.35$_{\pm 0.01}$ & 0.90$_{\pm 0.00}$ \\\bottomrule
\end{tabular}
\label{tab:LLM_j}
\end{table}

%% file: tables/prompt.tex
\begin{table}[h!]
\centering
\footnotesize
\caption{Comparison against prompt-augmented baselines.}
\begin{tabular}{lccc}\toprule
& AUC $\uparrow$ & QNC $\downarrow$ & Peak Acc. $\uparrow$ \\\midrule
MIRT & 0.77$_{\pm 0.03}$ & 0.70$_{\pm 0.03}$ & 0.82$_{\pm 0.00}$ \\
CARROT & 0.78$_{\pm 0.03}$ & 0.68$_{\pm 0.02}$ & 0.79$_{\pm 0.00}$ \\
\textsc{R2-Router} & 0.83$_{\pm 0.02}$ & 0.25$_{\pm 0.00}$ & 0.83$_{\pm 0.00}$ \\\bottomrule
\end{tabular}
\label{tab:prompt_ablation}
\end{table}